\newtheorem{thm}{Theorem}
\newtheorem{definition}{Definition}
\title{Redundancy of Hidden Layers in Deep Learning: An Information Perspective}
\author{
Chenguang Zhang$^1$\and
Yuexian Hou$^{*1}$\and
Dawei Song$^{2}$\and
Liangzhu Ge$^1$ \and
Shuai Yao$^1$
\affiliations
$^1$College of Intelligence and Computing(School of Computer Science and Technology), Tianjin University\\
$^2$Mathematics College of Intelligence and Computing, The Open University\\
\emails
Corresponding author: yxhou@tju.edu.cn
}
\begin{document}

\maketitle

\begin{abstract}
Although deep structures guarantee powerful expressivity of deep neural networks (DNNs), they may also trigger overfitting problems. To improve the generalization capability of DNNs while retaining their expressivity, many strategies were developed to improve the diversity among the hidden units. Following this research direction, we propose a label-based diversity measure (LDiversity) quantified as the gap between a newly added inductive-bias term and a canonical unsupervised diversity measure term by formalizing the effect of the entanglement of the hidden units on the generalization capacity as mutual information. The existence of an inverse relationship between LDiversity and the generalization capacity is proved; 
i.e., the decrease in LDiversity generally improves the generalization capacity. Further, a regularization method is proposed by using LDiversity as the regularizer. The experiments show that the new method can effectively reduce overfitting and decrease the generalization error, experimentally justifying our approach.
\end{abstract}

\section{Introduction}
\label{Intro}

Deep neural networks (DNNs) have achieved significant
success in many practical applications due to their strong
expression capacity and powerful learning ability. 
However,
the deep structure of DNNs may lead to complicated nonlinear
mappings from input to output, giving rise to the problem of overfitting. Therefore, many studies have been devoted to
developing approaches to improve the generalization capacity of DNNs in order to
address the overfitting problem. One research direction is to constrain the model complexity; these methods include dropout \cite{srivastava2014dropout}, weight decay \cite{krogh1991a}, and CIF \cite{zhao2018a}. 
Despite the high effectiveness  of these methods, they may not fully leverage the expression capability  of the model because they generally reduce the effective number of model parameters.

To improve the generalization capacity of DNNs while  simultaneously maintaining their expressivity, another research direction to pursue is to explore the diversity among the hidden units of a single specified layer of DNNs and to improve it, encouraging hidden units to be as uncorrelated or independent from each other as possible. 
For instance, Cogswell et al. minimized the cross-covariance of hidden activations to obtain diverse representation of hidden units to reduce overfitting \cite{cogswell2016reducing}. Gu et al. extended this method, treating nonoverlapping groups of hidden units as component learners in order to avoid the negative influence of the breakdown of correlations \cite{gu2018regularizing}. Impressively, by using mutual information among the hidden units as the measure of diversity, Brakel et al. proposed a method to learn independent features \cite{brakel2018learning}. Many other studies reported in the literature also investigated the positive role of feature independence in the feature encoding \cite{bengio2017independently,hjelm2019learning}. 

This paper follows the second research direction and further focuses on the role of label information in defining the diversity measure. In fact, while boosting the diversity among the hidden units of DNNs has been shown to be beneficial for the performance in classification tasks, there is no single widely accepted formal definition of diversity measure. It was found  that the role of inductive biases should be made explicit and enforced in the process of learning disentangled hidden representations for downstream tasks \cite{locatello2019challenging}. However, the measures of diversity in the current set mostly only considered the correlations among the hidden units over the whole mixed data distribution but neglected the local clustering feature of different classes, 
which may be harmful to the classification performance \cite{grover2019uncertainty}. 
A feasible approach to understand the role of label information in defining the diversity measure and obtain an appropriate measure with embedded inductive biases is to investigate the measure of diversity under supervised settings by examining the relationship between diversity and generalization capacity.  

  
To achieve this goal, we first introduce a generalization error bound of DNNs from an information perspective, following the work of Xu et al. \cite{xu2017information}, which formalizes the bound as mutual information between the activation values of the hidden units in the specified layer and model parameters from this layer to the end layer (see Fig.\ref{F1}). Intuitively, the new bound describes the information of the extracted features stored in the model parameters, which was proposed as a measure of the effective complexity of a network by Hinton and Van Camp and was used as a regularizer to simplify the networks by Achille and Soatto \cite{hinton1993keeping,achille2018emergence}. However, the direct usage of this bound is usually difficult due to its excessively complicated estimation in previous work. Nevertheless, compared to the traditional generalization error bounds that are based on hypothesis space, e.g., the Rademacher complexity \cite{boucheron2005theory} and the uniform stability \cite{bousquet2002stability}, the new bound is tight and simpler in form, making it sufficient for our further analysis.  

\begin{figure}[t]
\centering
\includegraphics[width=0.4\textwidth, height=0.18\textwidth]{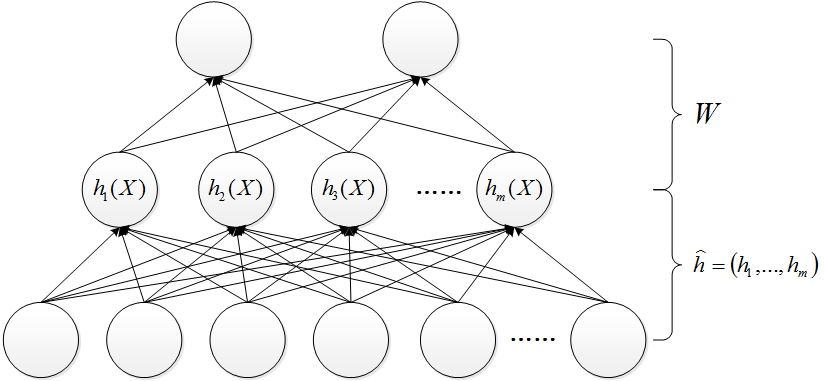}
\caption{ The hidden units are shown as mappings; $W$ is the collection of model parameters from the specified layer to the end layer.}
\label{F1}
\end{figure}

We then decompose the new bound and naturally obtain  a measure of diversity formulated as a difference of two terms, named the label-based diversity measure (LDiversity), where the first term that has been used to obtain independent features \cite{brakel2018learning} is a canonical unsupervised diversity measure and the second term is a label-based term that has  not been considered by the current diversity measures in DNNs;  this term is the main difference between our measure and the other measures, reflecting the inductive biases embedded in the hidden representations. Decreasing the defined label-based diversity measure is expected to improve the generalization capacity. 
Furthermore, if regarding the hidden units as base learners, the proposed diversity measure can also be viewed as the ensemble diversity proposed by Brown et al. \cite{brown2009an,zhou2010multi} for ensemble learning, where the ensemble diversity is shown to be the part of the upper bound of classification error. This means that decreasing LDiversity may also suppress the probability of classification error.
   
By using LDiversity as the regularizer, we develop a new regularization method named the LDiversity method (LDM), the goal of which is to minimize the classification loss and newly added LDiversity. In particular, the process of minimizing LDiversity is the same as that of training generative adversarial networks (GANs), where two additional ``discriminators'' are involved to estimate LDiversity by maximizing their output values. Finally, we apply this method to fully connected neural networks and convolutional neural networks. Many experiments show that LDM can effectively reduce the overfitting and decrease the generalization error compared to the methods without the LDiversity regularizer. Furthermore, LDiversity between hidden units is demonstrated to be a crucial factor for reducing the generalization error in DNNs.

\section{Preliminaries}
 Let $\mathcal{Z} = (\mathcal{X}, \mathcal{Y})$ be an instance space, where $\mathcal{X}$ is a feature space and $\mathcal{Y}$ is a label space. A training set $S$ of size $n$ is an $n$-tuple, i.e.,
\begin{equation}\label{eq5}
S=(Z^1, Z^2, ..., Z^n)
\end{equation}
of i.i.d random elements of $Z=(X, Y)\in\mathcal{Z}$ with an unknown PDF $P_Z(z)$. 
Given a neural network with multilayers, let $\hat{h}= (h_1, h_2,...,h_m)$ be the set of all hidden units $h_i$ in the discussed layer and $W\in \mathcal{W}$ be the collection of model parameters from the specified hidden layer to the end layer, where $\mathcal{W}$ is the hypothesis space of $W$. Due to the randomness in the realization of the dataset $S$, the values in 
$W$ are considered to be random variables.      

We will make frequent use of the following standard information theoretical quantities \cite{cover1991elements}. For a stochastic variable $X$, its Shannon entropy is defined as
\begin{equation}
\label{eq1}
H(X) = -\mathbb{E}_X[\log P_X(X)],
\end{equation}
where  $\mathbb{E}_{X}[\cdot]$ denotes the expectation of the random object within the brackets w.r.t to the subscript random variable $X$.

The mutual information of two stochastic variables is
\begin{equation}\label{eq2}
I(h_1; h_2) = H(h_1) + H(h_2) - H(h_1, h_2) 
\end{equation}
which by capturing the nonlinear statistical dependencies between the  variables can be reformulated
as the Kullback-Leibler (KL-)divergence between the
joint density and the product of the marginal densities, i.e.,
\begin{equation}\label{eq3}
I(h_{1}; h_{2})=D_{KL}(P_{h_1h_2}||P_{h_1}\otimes P_{h_2}),
\end{equation}
which is zero if and only if $h_1$ and $h_2$ are independent.
For more than two variables, the multivariate mutual information is defined as
\begin{equation}\label{eq4}
\begin{aligned}
&I(h_1; h_2,...;h_n)=D_{KL}(P_{h_1h_2...h_n}||\otimes_{i=1}^{m} P_{h_i}).\\
\end{aligned}
\end{equation}
We will later use it to measure the entanglement in the hidden units and still call it the mutual information in the following for consistency. Consequently, the conditional mutual information of multiple variables given $Y$ is 
\begin{equation}\label{eq5}
I(h_1; h_2,...;h_n|Y) = \mathbb{E}_Y[D_{KL}(P_{h_1h_2...h_n}||\otimes_{i=1}^{m} P_{h_i}|Y)],
\end{equation}
and will be employed below to describe the class-conditional correlation.

\section{Generalization Error Bound}
This section gives an upper-bound of the generalization error from an information-theoretical perspective, formulating the absolute difference in the expectation between the expected risk and the empirical risk as the mutual information between the hidden units and the model parameters.  
We follow the framework proposed by Russo and Xu et al. \cite{russo2020how,xu2017information}, where it is convenient to think of each unit in the discussed layer as a mapping from the datum to its activation value, i.e., $h_i: \mathcal{X} \to \mathbb{R} (i=1,...,m)$, to investigate the effect of entanglement of the hidden units on the generalization capacity since both the expected risk and the empirical risk are initially related to the datum rather than to the values of the hidden units; in fact,  we are more interested in the mechanism that produces the activation value than in the activation value itself (see Fig. \ref{F1}). Consequently, let $\hat{h}(X) = (h_1(X), ..., h_m(X))$ , $f(Z) = (\hat{h}(X), Y)$; then, given $W\in \mathcal{W}$, the loss function $l$ on the sample $Z$ can be restated as a function w.r.t. $f(Z)$ and $W$, i.e., $l:\mathcal{W}\times f(\mathcal{Z})\to \mathbb{R}^{+}$, where $Z=(X, Y)$. Accordingly, let $f(S) = \left(f(Z^1),..., f(Z^n)\right)$. Now, we are ready to obtain the upper bound.

The empirical risk of a hypothesis $W\in \mathcal{W}$ over the dataset $S$ is
\begin{equation}\label{eq6}
L_{{f(S)}}(W) \triangleq \frac{1}{n}\sum_{i=1}^{n}{l(f(Z^i), W)}.
\end{equation}
The expected risk of $W$ on $P_{S}$ is 
\begin{equation}\label{eq7}
\begin{aligned}
&L_{\overline{f(S)}}(W)\triangleq \mathbb{E}_{S} \left[ \frac{1}{n}\sum_{i=1}^{n}{l(f(Z^i), W)}\right]\\
&=\mathbb{E}_{f(S)}\left[\frac{1}{n}\sum_{i=1}^{n}{l(F^i, W)}\right],
\end{aligned}
\end{equation}
where $F^i=f(Z^i)$ $(1 \leq i \leq n)$ are i.i.d random variables. 
Taking expectation on the difference between $L_{f(S)}(W)$ and $L_{\overline{f(S)}}(W)$ with respect to the joint distribution $P_{(S,W)}(s, w)$, we obtain
\begin{equation}\label{eq9}
g(P_S,P_{W|S}) \triangleq \mathbb{E}_{(S,W)}\left[L_{\overline{f(S)}}(W) -L_{{f(S)}}(W)\right]. 
\end{equation}
Then, the generalization error can be decomposed as
\begin{equation}
\mathbb{E}\left[L_{\overline{f(S)}}(W) \right] = \mathbb{E}\left[L_{{f(S)}}(W) \right] + g(P_S,P_{W|S}). 
\end{equation}
We focus on $g(P_S,P_{W|S})$, which reflects the quality of the generalization of the output hypothesis. Some further steps show that
\begin{equation}\label{eq8}
\resizebox{.99\linewidth}{!}{$
\begin{aligned}
&g(P_S,P_{W|S}) =\mathbb{E}_{(f(S),W)}\bigl[ \mathbb{E}_{f(S)}\left[L_{f(S)}(W)\right] -L_{f(S)}(W)\bigr] \\
&=\mathbb{E}_{f(S)\otimes W}\left[L_{f(S)}(W)\right] - \mathbb{E}_{(f(S), W)}\left[L_{f(S)}(W)\right]
\end{aligned}
$}
\end{equation}
where $\mathbb{E}_{f(S)\otimes W}$ means taking the expectation w.r.t the product of the marginal PDFs of $f(S)$ and $W$.

Xu and Raginsky (Lemma 1 in \cite{xu2017information}) have justified that given two random variables $X$ and $Y$ with the joint PDF $P_{XY}$ and the product of the marginal PDFs $P_{\overline{X}\overline{Y}}=P_X\otimes P_Y$, if the function $c(X, Y)$ is a $\sigma-$subgaussian function under $P_{\overline{X}\overline{Y}}$, then 
\begin{equation}\label{eq9}
\left|\mathbb{E}_{(X,Y)}\left[c(X,Y)\right]-\mathbb{E}_{X\otimes Y}\left[c(X,Y)\right]\right|\leq\sqrt{2\sigma^2I(X; Y)}.
\end{equation}
where a random variable $U$ is $\sigma-$subguassian if $\log\mathbb{E}\left[e^{\gamma(U-\mathbb{E}[U])}\right]\leq\gamma^2\sigma^2/2$ for all $\gamma \in \mathbb{R}$. In fact, if the loss function in Eqs. \eqref{eq6} and \eqref{eq7} is restricted as a function bounded in $[a, b]$, e.g., the sigmoid function, thereby being a $\sigma$-subgaussian function by Hoeffding's lemma \cite{massart2007concentration}, then $L_{f(S)}(W)$ in Eq. \eqref{eq8} is consequently a $\sigma/\sqrt{n}$-subgaussian function for $W$ due to the independence among $f(Z_i) (1\leq i \leq n)$, where $\sigma = (b-a)/2$.
Then, according to Eq. \eqref{eq9}, by setting $X$ and $Y$ in Eq. \eqref{eq9} as $f(S)$ and $W$, respectively, we obtain the following lemma.
\newtheorem{Lemma}{Lemma}
\begin{Lemma}\label{lemma1}
If the loss function $l$ is $\sigma$-subgaussian, then the absolute value of $g(P_S, P_{W|S})$ is upper-bounded in terms of the mutual information between $f(S)$ and $W$, i.e.,
\begin{equation}\label{eq10}
\left|g(P_S, P_{W|S})\right| \leq \frac{1}{n} \sqrt{2\sigma^2}\sqrt{nI(f(S); W)}.
\end{equation}
\end{Lemma}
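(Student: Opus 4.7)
The plan is to derive the bound as a direct specialization of the Xu--Raginsky subgaussian inequality (Eq. \eqref{eq9}) to the pair $(X, Y) = (f(S), W)$ with the cost function $c(X, Y) = L_{f(S)}(W)$. From the rewriting already carried out in Eq. \eqref{eq8}, $g(P_S, P_{W|S})$ is literally the difference $\mathbb{E}_{f(S)\otimes W}[L_{f(S)}(W)] - \mathbb{E}_{(f(S), W)}[L_{f(S)}(W)]$, which matches the left-hand side of Eq. \eqref{eq9} verbatim. So the whole lemma reduces to certifying the subgaussian hypothesis with the right parameter and then quoting the Xu--Raginsky result.

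The key preparatory step is therefore to verify that $L_{f(S)}(W)$ is $(\sigma/\sqrt{n})$-subgaussian under the product of marginals, for each fixed $W$. By assumption, for every fixed $W$ the single-sample loss $l(F^i, W)$ is $\sigma$-subgaussian in $F^i$, and by Eq. \eqref{eq7}, $L_{f(S)}(W) = \frac{1}{n}\sum_{i=1}^{n} l(F^i, W)$ is an average of $n$ independent such variables because $F^1, \ldots, F^n$ are i.i.d. Subgaussian proxies combine additively over independent sums: a sum of $n$ independent $\sigma$-subgaussians is $\sigma\sqrt{n}$-subgaussian by the MGF product rule, and scaling by $1/n$ multiplies the proxy by $1/n$, yielding the claimed parameter $\sigma/\sqrt{n}$. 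Since this holds pointwise in $W$, it also holds under the product measure $P_{f(S)} \otimes P_W$, which is what Eq. \eqref{eq9} requires.

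Feeding this proxy into Eq. \eqref{eq9} with $(X, Y) = (f(S), W)$ gives
\begin{equation*}
\left|g(P_S, P_{W|S})\right| \;\leq\; \sqrt{2(\sigma/\sqrt{n})^2\, I(f(S); W)} \;=\; \frac{1}{n}\sqrt{2\sigma^2 \cdot n\, I(f(S); W)},
\end{equation*}
which is the stated form after pulling $1/\sqrt{n}$ out of the radical.

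There is essentially no deep obstacle here; the authors have already set out every ingredient in the paragraph preceding the lemma, and the argument is a corollary of Xu--Raginsky. The only point requiring mild care is the bookkeeping of the subgaussian parameter under averaging of $n$ i.i.d. terms --- confusing $\sigma$ with $\sigma\sqrt{n}$ or $\sigma/\sqrt{n}$ would rescale the bound by a factor of $n$ --- but this is routine once one writes down the MGF of the average. A secondary caveat worth acknowledging explicitly is that Eq. \eqref{eq9}'s subgaussian hypothesis must hold under $P_{\overline{X}\overline{Y}} = P_{f(S)} \otimes P_W$; because our bound on the MGF of $L_{f(S)}(W)$ is uniform in $W$, averaging over the marginal of $W$ preserves it, so the application is legitimate.
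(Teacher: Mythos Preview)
Your proposal is correct and matches the paper's own argument essentially step for step: the paper also rewrites $g(P_S,P_{W|S})$ as the difference in Eq.~\eqref{eq8}, notes that the i.i.d.\ average $L_{f(S)}(W)$ is $\sigma/\sqrt{n}$-subgaussian, and then applies the Xu--Raginsky inequality \eqref{eq9} with $(X,Y)=(f(S),W)$ to obtain \eqref{eq10}. There is nothing to add.
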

\section{Label-based Diversity Measure}

The generalization error bound deduced by Lemma 1 may be more tightly coupled to the generalization error than some existing bounds can because the new bound depends on almost all ingredients of learning problems, including the distribution of the dataset, the hypothesis space and the learning algorithm, while some existing bounds, such as VC dimension or Rademacher complexity, mainly depend on the hypothesis space and neglect of the learning algorithm, which potentially means a looser quantity of bound to unify the other ignored ingredients in the learning problems. Moreover, Lemma 1 implies that regularizing the empirical risk with $I(f(S); W)$ may lead to improved genelarization capacity. However, due to the high dimensions of hypothesis space $\mathcal{W}$, the direct usage of $I(f(S); W)$ is usually intractable. In this section, we decompose the upper bound in Eq. \eqref{eq10}, remove the terms related to $W$ and naturally derive a label-based diversity measure among the hidden units.

 \begin{thm}\label{lemma2}
Decomposing the upper bound in Eq. \eqref{eq10}, we obtain
\begin{equation}\label{eq11}
\resizebox{.99\linewidth}{!}{$
\begin{aligned}
& \frac{1}{n} \sqrt{nI(f(S); W)}=\\
& \sqrt{
                        \begin{split}
                              &I(h_1(X);...;h_m(X))-I(h_1(X);...;h_m(X)|Y)\\
                              &-\sum_{i=1}^{m}{I\bigl(h_i(X); Y\bigr)}+H(Y)+\frac{H(W)-H(S_y, W|\hat{h}(S_x))}{n},
                        \end{split}  
                         }
\end{aligned}
$}
\end{equation}
where $S_x = \{X_1,...,X_n\}$, $S_y = \{Y_1,...,Y_n\}$ and $\hat{h}(S_x) = \{{h}(X_1),...,{h}(X_n)\}$.
\end{thm}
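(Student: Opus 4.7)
My plan is to square the claimed identity and prove the resulting equality between $\tfrac{1}{n} I(f(S); W)$ and the five-term expression inside the radical of Eq.~\eqref{eq11}. First I would apply $I(A;B) = H(A) + H(B) - H(A,B)$ with $A = f(S) = (\hat{h}(S_x), S_y)$ and $B = W$, and then use the entropy chain rule $H(\hat{h}(S_x), S_y, W) = H(\hat{h}(S_x)) + H(S_y, W \mid \hat{h}(S_x))$ to peel off the $W$-dependent tail. Dividing by $n$, this immediately yields the last summand $\bigl[H(W) - H(S_y, W \mid \hat{h}(S_x))\bigr]/n$ on the right-hand side of Eq.~\eqref{eq11}, and leaves a residual $\bigl[H(\hat{h}(S_x), S_y) - H(\hat{h}(S_x))\bigr]/n$ that depends only on the data and feature maps.

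Second, I would invoke the i.i.d.\ assumption on $Z^1, \dots, Z^n$: the pairs $(\hat{h}(X^i), Y^i)$ are i.i.d., so the $n$-fold joint entropies tensorize, and the residual collapses to the single-sample quantity $H(Y \mid \hat{h}(X)) = H(Y) - I(\hat{h}(X); Y)$, which already accounts for the $+H(Y)$ term in Eq.~\eqref{eq11}. It remains to express $I(\hat{h}(X); Y)$ in terms of the three diversity-flavored summands. For this I would apply the total-correlation form implicit in Eq.~\eqref{eq4}, namely $I(h_1;\dots;h_m) = \sum_i H(h_i) - H(h_1,\dots,h_m)$, together with its class-conditional analogue, to both $H(\hat{h}(X))$ and $H(\hat{h}(X) \mid Y)$. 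Subtracting and combining with $I(h_i(X); Y) = H(h_i(X)) - H(h_i(X) \mid Y)$ gives
\begin{equation*}
I(\hat{h}(X); Y) = \sum_{i=1}^{m} I(h_i(X); Y) - I(h_1(X);\dots;h_m(X)) + I(h_1(X);\dots;h_m(X) \mid Y),
\end{equation*}
and substituting this into $H(Y) - I(\hat{h}(X); Y)$ reproduces the first three terms on the right-hand side of Eq.~\eqref{eq11} with exactly the signs claimed.

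The proof is essentially bookkeeping with the chain rules for entropy and mutual information, so the main obstacle is conceptual rather than technical: one must treat the multivariate mutual information of Eq.~\eqref{eq4} consistently as the total correlation $\sum_i H(h_i) - H(\hat{h}(X))$, and combine its unconditional and class-conditional versions with matching signs, so that the $-I(h_1;\dots;h_m)$ coming from the expansion of $H(\hat{h}(X))$ and the $+I(h_1;\dots;h_m \mid Y)$ coming from $H(\hat{h}(X) \mid Y)$ pair off with $-\sum_i I(h_i(X); Y)$ exactly as stated. The i.i.d.\ tensorization is what lets the overall $1/n$ factor disappear cleanly on the data terms while surviving only on the $W$-coupled piece, matching the asymmetric normalization visible in Eq.~\eqref{eq11}.
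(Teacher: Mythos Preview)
Your proposal is correct and follows essentially the same route as the paper: both arguments start from $I(f(S);W)=H(\hat h(S_x),S_y)+H(W)-H(\hat h(S_x),S_y,W)$, use the chain rule to isolate $H(S_y,W\mid \hat h(S_x))$, rewrite the remaining data-only piece via the total-correlation identity $I(h_1;\dots;h_m)=\sum_i H(h_i)-H(\hat h)$ and its conditional analogue, and invoke the i.i.d.\ assumption to pass from $S$-level to single-sample quantities. The only cosmetic difference is ordering: the paper keeps everything at the $S$-level and applies i.i.d.\ tensorization at the very end, whereas you peel off the $W$-dependent term first and tensorize the residual $H(\hat h(S_x),S_y)-H(\hat h(S_x))=nH(Y\mid \hat h(X))$ before decomposing $I(\hat h(X);Y)$, which is arguably a cleaner narrative but the same computation.
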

\begin{proof}
Only a belief proof is given here (see the the supplementary material for the details).
\begin{equation}\label{eq12}
\begin{aligned}
&I(f(S);W)=I(\hat{h}(S_x), S_y; W)\\
&=H(h_1(S_x),...,h_n(S_x), S_y) \\
&\quad - H(h_1(S_x),...,h_n(S_x), S_y, W) + H(W)\\
\end{aligned}
\end{equation}
By adding $ \sum_{i=1}^{m}{H(h_i(S_x))} - \sum_{i=1}^{m}{H(h_i(S_x)|S_y)} - \sum_{i=1}^{m}{I(h_i(S_x), S_y)} $ to the right-hand side of the above equation,
it follows that
\begin{equation}\label{eq13}
\begin{aligned}
&I(f(S);W)=  - \sum_{i=1}^{m}{I(h_i(S_x); S_y)}+ H(W)\\
&+\big[H(S_y)- I(h_1(S_x);...;h_m(S_x)|S_y)\big]\\
& +\big[I\big(h_1(S_x);...;h_m(S_x)\big) - H\big(S_y,W|\hat h{(S_x)}\big)\big] .
\end{aligned}
\end{equation}
Considering that the samples $(X_i, Y_i) $ in $S$ are sampled in an i.i.d.
fashion, we have 
\begin{equation}
\begin{aligned}
&I(h_1(S_x);h_2(S_x);...;h_m(S_x)|S_y)\\
&\,\,\,\,=nI(h_1(X);h_2(X);...,h_m(X)|Y),\\
&I(h_1(S_x);...;h_m(S_x))=nI(h_1(X);...;h_m(X)),\\
&\sum_{i=1}^{m}{I(S_y;h_i(S_x))} + H(S_y)\\
&\,\,\,\,=n\big(\sum_{i=1}^{m}{I(h_i(X);Y)}+H(Y)\big).\\
\end{aligned}
\end{equation}
Combining above equations gives Eq. \eqref{eq11}, which completes the proof.
\end{proof}

Let us focus on the Eq. \eqref{eq11}. There are five terms in the square root. Only the first two terms are completely unrelated to the sample size $n$ and the model parameters $W$, reflecting the relationships among the hidden units. Since the two terms are part of the decomposed upper bound, regularizing the empirical risk with their sum is expected to reduce the upper bound of the absolute value of $g(P_S, P_{W|S})$ as well as the generalization error. We argue that the two terms naturally quantify the diversity among the hidden units (see Definition 1). 
For remaining terms: the third term, which is the sum of the respective relevancy of the hidden units $h_i (1\leq i \leq m)$ to the labels $Y$, demonstrating the classification ability of each hidden unit itself and as a whole having a positive correlation, with the second term to some extent, is not considered by our diversity measure; the forth term $H(Y) $ is non-optimizable w.r.t. the training process, which is also not considered by the new diversity measure; Compared to the other terms, the last term is the only one related to the sample size $n$ and the model parameters $W$, which makes it unsuitable for describing the diversity among hidden units. Moreover, when the sample size is relatively large, this term tends to be relatively small and consequently has a small effect on the generalization error since the upper bound in Lemma 1 for any reasonable hypothesis necessarily declines to a very small value when the sample size increases. Thus, we do not consider optimizing this term in this work.

\begin{definition}
The label-based diversity measure among the hidden units is defined as
\begin{equation}\label{eq17}
\resizebox{.99\linewidth}{!}{$
D_{LB}(S) = I(h_1(X);...;h_m(X)) - I(h_1(X);...;h_m(X)|Y).
$}
\end{equation}
\end{definition}
As discussed in the introduction, the first term in the diversity measure is the canonical unsupervised diversity measure, which was used to learn independent data representations \cite{brakel2018learning}. It has also been shown that reducing such a term of correlations among the hidden units will lead to an improved generalization capability\cite{cogswell2016reducing,hjelm2019learning}. The second term in LDiversity is a label-dependent term; it describes the local clustering feature captured by the hidden units. Improving this term may strengthen the class-conditional correlation and make the activations of the same class behave more collaboratively, which is usually important for a classification task. This term is the main difference between  our diversity measure and other diversity measures.
 
It is worth noting that the diversity measure is identical in form to the ensemble diversity measure proposed by \cite{brown2009an,zhou2010multi} for ensemble learning. They derived the ensemble measure by analyzing the upper bound of the probability of the classification error,which is \cite{hellman1970probability} 
\begin{equation}\label{eq181}
P(C(\hat h)\neq Y) \leq \frac{H(Y) -I(\hat h; Y)}{2},
\end{equation}
where we continue to use the previous symbols and see $\hat h = (h_1, h_2, ..., h_m)$ as a set of base classifiers for the sample $(X, Y)$; $C(\cdot)$ is any given combination function that minimizes the probatility $P(C(\hat h)\neq Y)$. By decomposing the mutual information term in Eq. \eqref{eq181}, they obtained
\begin{equation}\label{eq18}
\resizebox{0.99\linewidth}{!}{$
\begin{aligned}
I(\hat h; Y) &= \sum\limits_{i=1}^{m}{I\bigl(h_i; Y\bigr)} + I(h_1;...;h_m|Y)- I(h_1;...;h_m),
\end{aligned}
$}
\end{equation}
and defined the sum of the last two terms as the ensemble diversity measure. Although the two measures have the same form, the ensemble diversity measure is based on the the Bayesian learning framework where only 0-1 loss is permitted, which makes it not very suitable for the cases in deep learning; they also did not propose an effective process for using the measure in practice. Nonetheless, their work implies that if we can regard the hidden units as base classifers, the decrease in  LDiversity may well lead to a decrease of the classification error. 

\section{Regularization Method}
In this section, a new regularization method named the label-based diversity method (LDM) is proposed by using LDiversity
 $D_{LB}$ as the regularizer. 
Its total loss function is formulated as  
\begin{equation}
\label{eq29}
T_{loss} = E_{loss} + \lambda D_{LB},
\end{equation}
where $E_{loss}$ is the premier loss function of the DNNs without any regularizer, for instance, the cross-entropy between the outputs of DNNs and the labels; $D_{LB}$ controls the label-based diversity among the hidden units in one specified layer; and $\lambda > 0$ is the balance parameter.

The regularizer $D_{LB}$ is actually the difference of two mutual information terms. Although the estimation of mutual information was recognized as a very difficult problem due to the continuity and high dimensions of data, recent studies \cite{belghazi2018mine,brakel2018learning} revealed that this problem can be solved in terms of the Donsker-Varadhan representation \cite{donsker1975asymptotic} of KL-based mutual information, which is
\begin{equation}
D_{KL}(P||Q) = \sup_{T:\Omega \to \mathbb R}\left\{\mathbb{E}_P[T] - \log(\mathbb{E}_Q[e^T])\right\},
\end{equation}
where $T$ is usually realized as a neural network such that the two expectations are finite. Then, by Eq. \eqref{eq4}, the mutual information is estimated by optimizing $T$ to narrow the divergence between the joint distribution and the product of the marginals. However, such strategy for KL-based mutual information may suffer from the instability problem; an alternative approach is to use Jensen-Shannon (JS-)divergence-based mutual information to replace KL-based mutual information as proposed by Brakel and Bengio, where the possible deviation of using JS-divergence is usually acceptable \cite{brakel2018learning}. That is,
\begin{equation}\label{eq21}
I(h_{1}; h_{2};...;h_{m})\approx D_{JS}(P_{h_1h_2...h_m}||\otimes_{i=1}^{m}P_{h_i}),
\end{equation}
where $D_{JS}$ is JS-divergence. It is estimated by
\begin{equation}\label{eq22}
\begin{aligned}
&D_{JS}(P_{h_1h_2...h_m}||\otimes_{i=1}^{m}P_{h_i})\geq\sup_{T}\bigl\{\mathbb{E}_{}\bigl[\log[\sigma(T(\hat{h}))]\bigr]\\
&\, +\mathbb{E}\bigl[\log[1-\sigma(T(\overline{h}))]\bigr]\bigr\},
\end{aligned}
\end{equation}
where $\overline{h} = (\overline{h_1}, \overline{h_2},...,\overline{h_m})$ obeys the distribution $\otimes_{i=1}^{m}P_{h_i}$ and hereinafter $\sigma(\cdot)$ represents the sigmoid function. Similarly, for the conditional mutual information, we have
\begin{equation}\label{eq23}
I(h_{1}; h_{2};...;h_{m}|Y)\approx \mathbb{E}_Y\left[D_{JS}(P_{h_1h_2...h_m}||\otimes_{i=1}^{m}P_{h_i}|Y)\right],
\end{equation}
where taking expectation on $Y$ requires using Eq. \eqref{eq22} to first obtain the corresponding JS-divergence for any given $Y$ and combining the obtained divergence according to the prior probability of $Y$, which is estimated by the proportion of samples of each class to the total. To distinguish the network $T$ used in Eqs. \eqref{eq21} and \eqref{eq23}, they are denoted by $T_1$ and  $T_2$, respectively. For brevity, the JS-divergence for conditional mutual information is abbreviated as $D_{JS}^L$.

The learning algorithm is finally shown as an iterative min-max process: 
\begin{equation}
\label{eq24}
\resizebox{.99\linewidth}{!}{$
\begin{aligned}
&\min_{\theta}\biggl\{E_{loss} + \lambda\max_{T_1}\biggl[\mathbb{E}_{\hat{h}}\bigl[\log[\sigma(T_1)]\bigr]+\mathbb{E}_{\overline{h}}\bigl[\log[1-\sigma(T_1)]\bigr]\biggr]\\
&- \lambda\mathbb{E}_{Y}\biggl\{\max_{T_2}\biggl[\mathbb{E}_{\hat{h}|Y}\bigl[\log[\sigma(T_2)]\bigr]+\mathbb{E}_{\overline{h}|Y}\bigl[\log[1-\sigma(T_2)]\bigr]\biggr]\biggr\}\biggr\},
\end{aligned}
$}
\end{equation}
where the maximization process guarantees a sufficient approximation to LDiversity $D_{LB}$ by $T_1$ and $T_2$; and the minimization process is the training process of the regularizing DNNs to obtain the classifier $\theta$. The overall training process of LDM is similar to that of generative adversarial networks (GANs) \cite{goodfellow2014generative}. In fact, both $T_1$ and $T_2$ in LDM play the same role as the discriminator in GANs (see Fig. \ref{F2}).

\begin{figure}[tb]
\centering
\includegraphics[width=0.4\textwidth, height=0.25\textwidth]{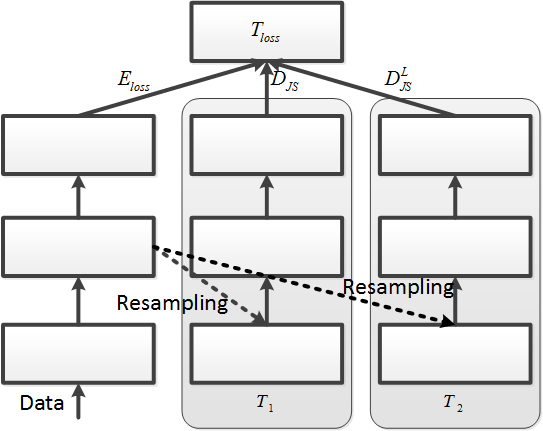}
\caption{ Architecture of the proposed method LDM}
\label{F2}
\end{figure}

To implement the learning algorithm presented in Eq. \eqref{eq24}, it is important to estimate the expectations first. The expectation taken on the joint distribution $P_{\hat{h}}$ or $P_{\hat{h}|Y}$, can be estimated directly by its average value on the samples from the joint distribution. However, taking expectation on the product of marginals  $P_{\overline{h}}$ or $P_{\overline{h}|Y}$ is not straightforward because there are no samples from such a distribution for the empirical estimation. In this work, two strategies are established to approximately obtain samples from the product of marginals according to the type of network. For the case of fully connected neural networks, each sample from the product of marginals with $M$ dimensions is obtained by randomly selecting $M$ samples from the joint distribution, taking the $i$th element from the $i$th selected sample and combining them. For convolutional neural networks (CNNs), after viewing each filter in the discussed layer as a map, each group of the mapped values of all the filters is seen as a sample from the joint distribution. For instance, in the case where there are 3 filters in the the specified CNNs layer, then the samples from the joint distribution are the vectors with 3 mapped values of all the 3 filters. Then, by the same method applied to the fully connected neural networks, we can obtain the samples from the product of marginals. The implementation of LDM is presented in Algorithm 1 (also see Fig. \ref{F2}).

\begin{algorithm}[t]
\caption{Adversarial train loop of LDM}
\label{alg:algorithm}
\textbf{Input}: dataset $S$, classifier $\theta$ as well as auxiliary networks $T_1$ and $T_2$, hidden units $\hat{h}=(h_1,h_2,...,h_m)$ in one specified layer as maps, loss function $l(\cdot)$\\
\textbf{Parameter}: $\lambda$\\
\textbf{Output}: $\theta$
\begin{algorithmic}[1] 
\WHILE{Not converged}
\STATE Sample a batch ${S_b}$ of K data vectors from $S$. 
\STATE $\hat{h}_{{S_b}}\leftarrow \hat{h}(S_b)$, $E_{loss}\leftarrow l(S_b, \theta)$. 
\STATE Resample from $\hat{h}_{{S_b}}$ to obtain new samples $\overline{h}_{{S_b}}$ that obey the product of the marginals. 
\STATE $D_{JS} \leftarrow \log[\sigma(T_1(\hat{h}_{{S_b}}))]+\log[1-\sigma(T_1(\overline{h}_{{S_b}}))]$. 
\STATE Update $T_1$ to maximize $D_{JS}$.
\STATE $D_{JS}^L\leftarrow 0$ . 
\FOR {each label $i$} 
\STATE Get all samples ${{S_{i}}}$ in $S_b$ with label $i$
\STATE $\hat{h}_{{S_i}}\leftarrow \hat{h}(S_{i})$. 
\STATE Resample from $\hat{h}_{{S_i}}$ to obtain new samples $\overline{h}_{{S_i}}$ that obey the product of the marginals. 
\STATE $D_{JS}^{L_i} \leftarrow \log[\sigma(T_2(\hat{h}_{{S_i}}))]+\log[1-\sigma(T_2(\overline{h}_{{S_i}}))]$. 
\STATE Update $T_2$ to maximize $D_{JS}^{L_i}$.
\STATE Estimate $P(Y=i)$.
\STATE $D_{JS}^{L} \leftarrow D_{JS}^{L} + P(Y=i) \cdot D_{JS}^{L_i}$.
\ENDFOR
\STATE $T_{loss} = E_{loss} + \lambda (D_{JS} - D_{JS}^L)$.
\STATE Update $\theta$ to minimize $T_{loss}$.
\ENDWHILE
\STATE \textbf{return} $\theta$
\end{algorithmic}
\end{algorithm}

\section{Experiments}
We apply LDM to fully connected neural networks and convolutional neural networks. We compare it with the method without a regularizer (NONE), dropout with a dropout rate of 0.5 \cite{srivastava2014dropout}, the method with decorrelation regularizer (Decov) in which the hyperparameter was set to 0.1 \cite{cogswell2016reducing}
and the method with the unsupervised diversity term in LDiversity as a regularizer (UDM), in which the balance parameter was set to 0.1 as proposed by Brakel and Bengio in \cite{brakel2018learning}.
All the methods involved were implemented by TensorFlow \cite{abadi2016tensorflow}.
\subsection{Experiments on Fully Connected Neural Networks}
\paragraph{Dataset.} The experiments were conducted on the MNIST dataset \cite{lecun2010mnist}, which contains a training set of 60000 samples and a test set of 10000 samples with pixel values normalized to [0, 1]. Moreover, Gaussian noise with a mean value 0 and variance 1 was added to the original dataset to increase the performance differentiation.  
\paragraph{Method Settings.} 
Since the goal is to check the role of the inductive-bias term in LDiversity and evaluate the performance of LDM and other regularization methods, we used a simple 3-layer fully connected network in this work, with 32 ReLUs in the hidden layer and ten units in the output layer. The batch size was set to 64. The main network was trained using the Adam algorithm \cite{da2014method} with a learning rate of 0.001 until the number of iterations reached 1000. Moreover, the architectures of two auxiliary networks $T_1$ and  $T_2$ were both set to 32-200-1. Their training settings were the same as those of the main network except that the number of updates for each update of main network was set to 4; and the learning rate was set to 0.0001. The balance parameter $\lambda$ was set to 0.7.  

\begin{figure*}[t]
\centering
\begin{subfigure}{0.3\linewidth}
 \label{Fig1a}\includegraphics[width=\columnwidth,  height=\textwidth]{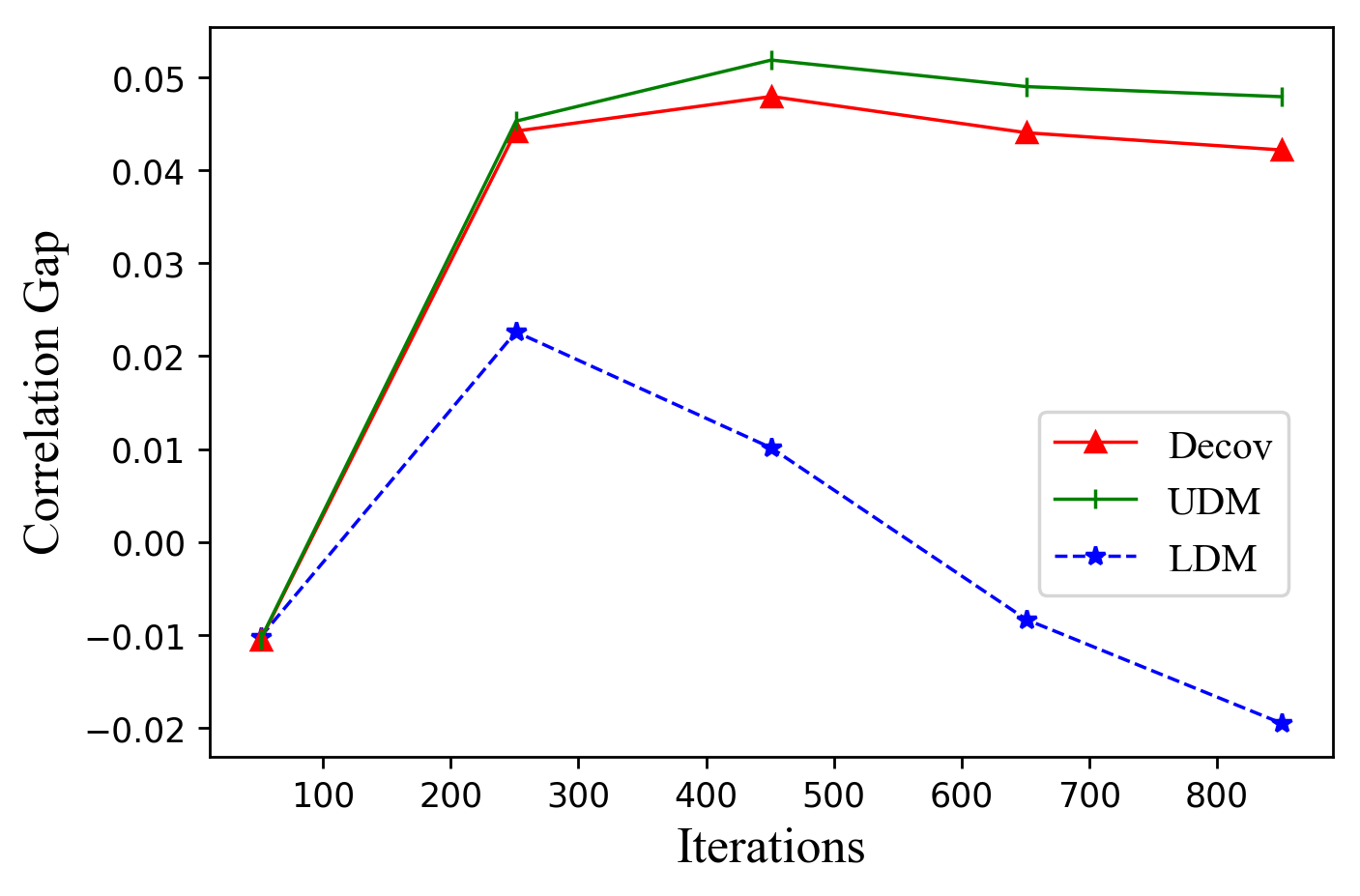} 
\caption{MNIST} 
\end{subfigure} 
\begin{subfigure}{0.3\linewidth}
\label{Fig1b} 
\includegraphics[width=\columnwidth, height=\textwidth]{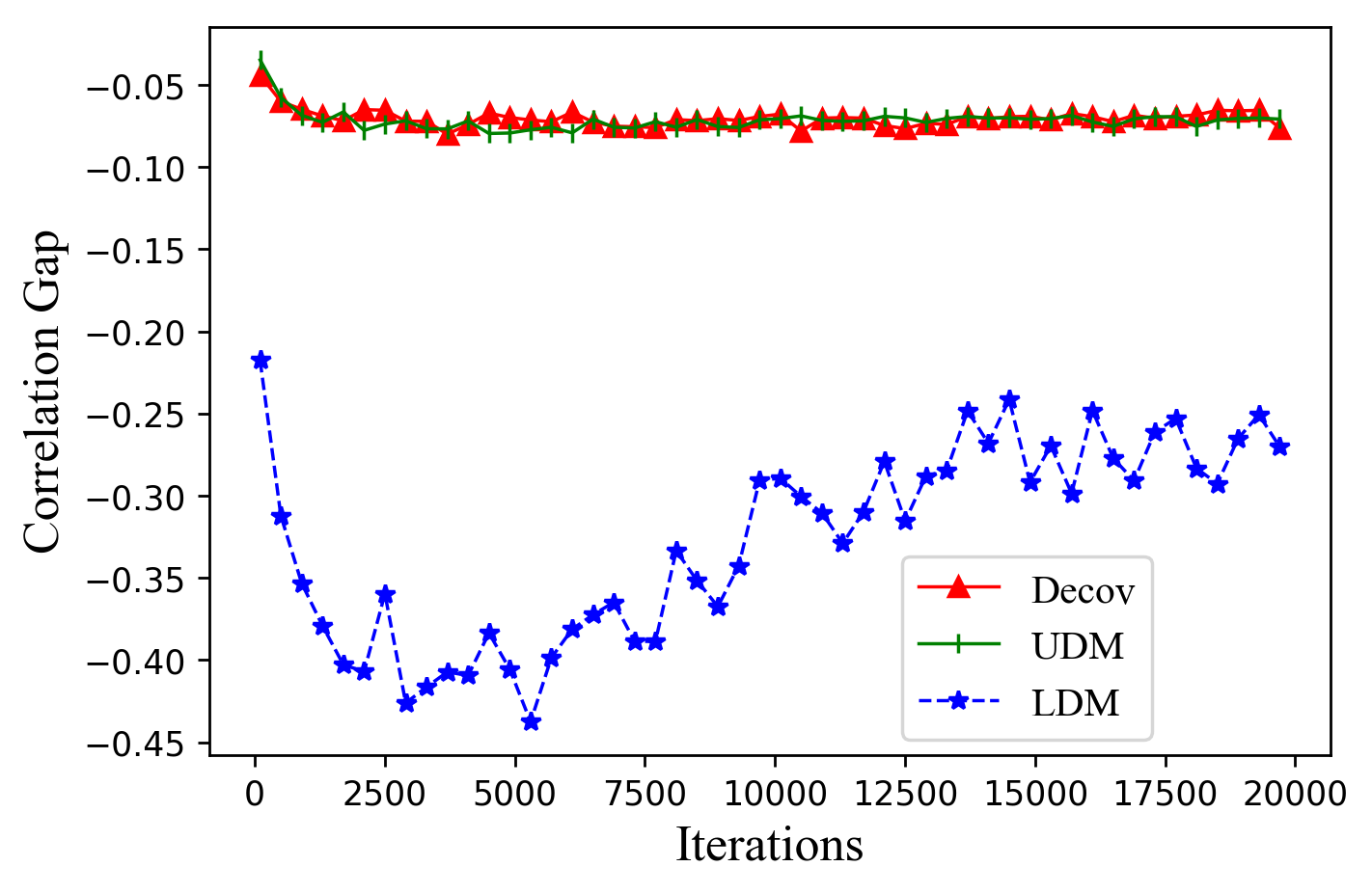}
\caption{CIFAR-10} 
\end{subfigure}
\begin{subfigure}{0.3\linewidth}
 \label{Fig1c} 
\includegraphics[width=\columnwidth, height=\textwidth]{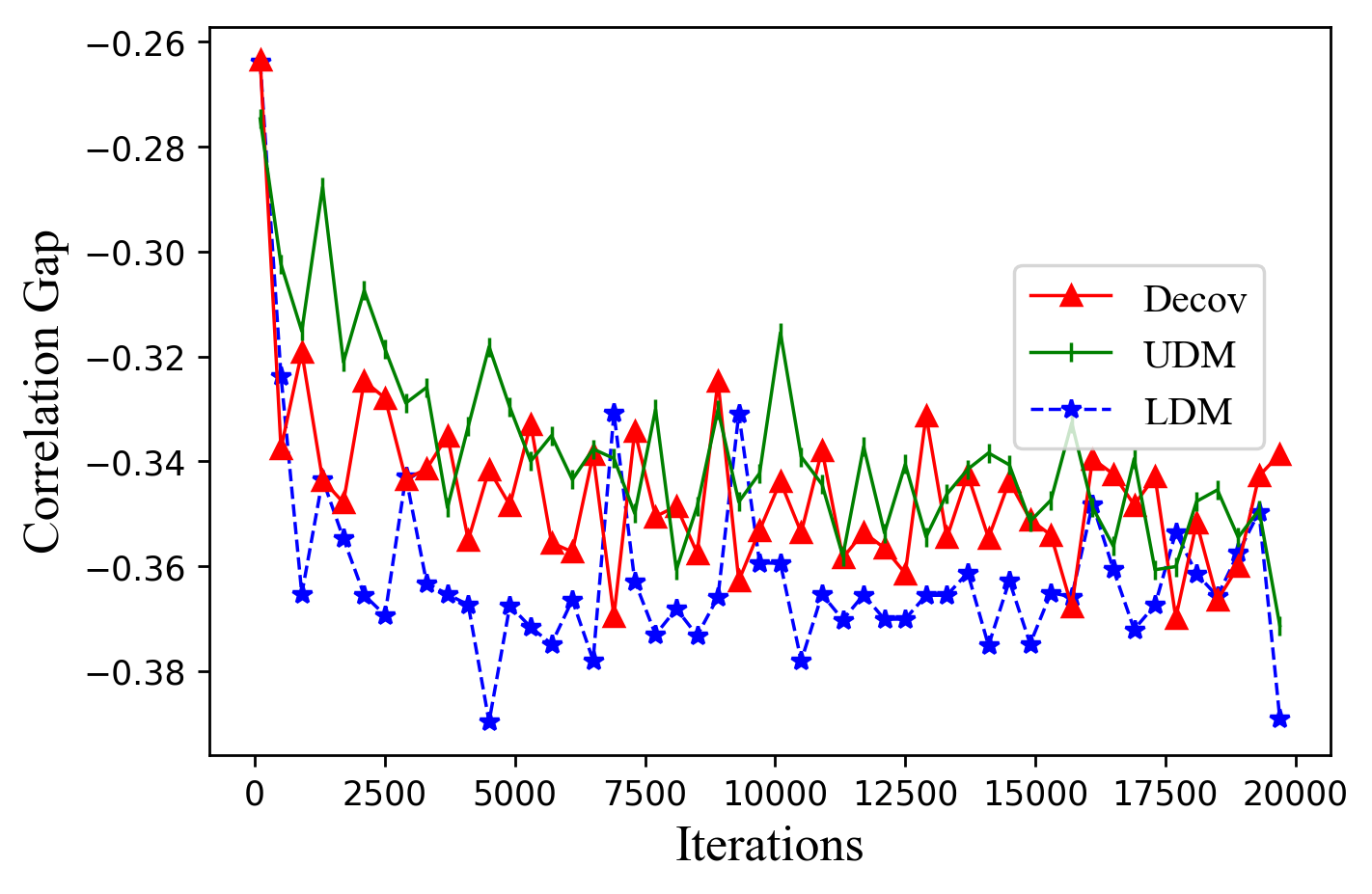} 
\caption{CIFAR-100}
\end{subfigure} 
\caption{ Comparing the correlation gap among the hidden units by different methods with increasing iteration numbers on (a) MNIST dataset; (b) CIFAR-10 dataset; (c) CIFAR-100 dataset}
\label{Fig1}
\end{figure*}

\begin{table*}[htpb]
\caption{Classification results of all methods on the MNIST dataset with fully connected networks and on the CIFAR-10 and CIFAR-100 datasets with a CNN. The best scores are in bold. The symbol ``$\dagger$"
means statistical improvement over all baselines}
\centering
\renewcommand\arraystretch{1.2}  
\begin{tabular}{|c|c|c|c|c|c|c|c|c|c|}  
\hline
  \multirow{2}{*}{Methods}&\multicolumn{3}{c|}{MNIST}  &   \multicolumn{3}{c|}{CIFAR-10}      & \multicolumn{3}{c|}{CIFAR-100} \\ \cline{2-10}
                           
             & Train   & Test  & Train - Test             & Train   & Test  & Train - Test    & Train    & Test     & Train-Test   \\
                          \hline
\hline
NONE         & 0.702 & 0.679  &0.023              & 0.978      & 0.740  &0.238                       &  0.949           &0.376          & 0.573     \\\hline 
Dropout      & 0.670     & 0.650  &0.020          & 0.980      & 0.755  &0.226                        &  0.925           & 0.436         & 0.489     \\\hline
Decov        & 0.699     & 0.674  &0.016          & 0.982       & 0.736  &0.246                         & 0.923           &0.379          & 0.544     \\\hline
UDM          & 0.703     & 0.675  &0.028          & 0.978       & 0.736  &0.242                          &  0.934           &0.381          & 0.553    \\\hline
LDM          & 0.691     & \textbf{0.680}  &\textbf{0.011}$^\dagger$     & 0.851 & \textbf{0.765}  &\textbf{0.086}$^\dagger$  & {0.610} &\textbf{0.440}& \bf{0.170}$^\dagger$     \\\hline
\end{tabular}
\label{tab:booktabs}
\end{table*}

\paragraph{Regularizer Comparisons.} We first checked whether LDM can reduce LDiversity. For a fair comparison, we did not investigate the value of LDiversity directly but checked the difference between the class-independent correlation and the class-conditional correlation, named the correlation gap, whose definition is given below. Given any pair of hidden units $i$ and $j$ $(1\leq i \leq m,1\leq j \leq m )$, the covariance between them is
\begin{equation}
C_{ij} = \frac{1}{n}\sum_{k=1}^{n}{(h_i^k-\mu_i)(h_j^k-\mu_j)},
\end{equation}
where $\mu_i=\frac{1}{n}\sum_{k}{h_i^k}$ is the sample mean of the activations of hidden unit $i$ over all the samples. Then the class-independent correlation is defined as 
\begin{equation}
\label{eq20}
Corre = \frac{1}{m(m-1)}\left(\sum_i \sum_j\frac{ |C_{ij}|}{\sqrt{C_{ii}C_{jj}}}-m\right).
\end{equation}
Conformably, the class-conditional correlation is the expectation of correlation over the class labels, which is estimated by using Eq. (\ref{eq20}) to obtain the respective correlations of given labels first and combining them according to the proportions of samples of each class as weights. 
Moreover, we did not record the correlation gap of dropout or NONE method since they were not designed to enforce the diversity.

Every experiment was repeated five times. The average results are shown in Fig. 3(a), from which we can see that as the number of iterations increases, the correlation gap of LDM is smaller than that of the other methods, indicating that LDM encourages the reduction of LDiversity. 

Further, we examined the classification performance of these methods, where the generalization capacity is evaluated by the difference between the training accuracy and test accuracy. The results are shown in the left part of Table 1. From Table 1, we can observe that LDM has the best test accuracy as well as a minimal  accuracy gap. We note that the other regularization methods differ from LDM mainly in the absence of the inductive-bias term. Particularly  for UDM, its regularizer is exactly the same as the first term of LDiversity. It is reasonable to attribute the good performance of LDM to the effect of the additional inductive-bias term in the LDiversity. Additionally, it is noteworthy that UDM achieves the worst performance on the accuracy gap even compared with the NONE method. The reason may be that over-disentanglement of the features may destroy the valuable local clustering information and weaken the learning ability of the networks. However, the use of regularization methods in the term of LDiversity can avoid such problems.

\subsection{Experiments on convolutional neural networks}

\paragraph{Dataset.} The experiments were conducted on the CIFAR-10/100 datasets. The CIFAR-10 dataset consists of 60000 32x32 color images in 10 classes, with 50000 training images and 10000 test images. CIFAR-100 is similar to CIFAR-10: it has 100 classes 6000 images, with per class 600 images.
\paragraph{Method Settings.} We compared these methods on the CIFAR10-quick architecture, which contains 3 convolutional layers followed by a fully connected layer with 64 hidden units and a softmax layer. Since the different features in the obtained representation by CNN may come from the same weights, i.e., they share the same mappings, while LDM measures the diversity among different mappings, we applied LDM to the last pool layer and regarded the resampled filters in this layer as mappings (see the Section Regularization Method). For fair comparison, we also apply Decov and UDM to this layer because they also used some diversity measure as the regularizer. Dropout is applied to the fully connected layer.
With exception of setting the number of iterations  to 20000, other training settings including the architectures of the two auxiliary networks were set to be the same as those in the fully connected neural networks.      
\paragraph{Regularizer Comparisons.}
We also investigated the changes in the values of the correlation gap with increasing numbers of iterations. The average results of all the methods over 5 trials are shown in Figs. 3(b) and (c), where only the results obtained by LDM, Decov and UDM are recorded. We can see that on both the CIFAR-10 and CIFAR-100 datasets, LDM achieves smaller correlation gaps than the other methods, which further confirms that LDM is an effective approach to enforce the diversity among the hidden units while strengthening the local clustering feature of different classes. 

The average classification accuracy of the examined methods on the CIFAR-10 and CIFAR-100 datasets are presented in the right two parts of Table 1. On the CIFAR-10 dataset, we observe that LDM outperforms the other methods on test accuracy and has a minimal train-test accuracy gap. In particular, LDM shows a 0.1 improvement on test accuracy and an approximately 0.14 improvement on the accuracy gap compared to UDM, which may be due to the use of the inductive-bias term in LDM.

We also observe similar results  on the CIFAR-100 dataset, justifying our hypothesis that the diversity measure should reflect the inductive-bias information. 

Finally, we tested the influence of hyperparameters on experimental performance and record the results on Table 2 and 3. From Table 2 and 3 we can find that LDM achieves the best performance when $\lambda = 0.7$ or $0.5$; moreover, as the value of $\lambda$ increases, it plays a larger role in LDM and controls the accuracy gap to be smaller, which justifying our point that the decrease in LDiversity generally improves the generalization capacity.
\begin{table}[tph]
\caption{Results with different value of $\lambda$ on dataset CIFAR-10 .}
\label{sample-table}
\begin{center}
\begin{small}
\begin{tabular}{cccccc}
\toprule
         & 0.1 & 0.3 & 0.5  & 0.7 &0.9 \\
\midrule
Train     & 0.982  & 0.973 & 0.923  & 0.851 & 0.826 \\
Test      & 0.743  &0.755  & 0.767  & 0.765 & 0.756\\
Train - test  & 0.217 &  0.239 &0.156 &0.086 &0.07  \\
\bottomrule
\end{tabular}
\end{small}
\end{center}
\end{table}

\begin{table}[tph]
\caption{Results with different value of $\lambda$ on dataset CIFAR-100 .}
\label{sample-table}
\begin{center}
\begin{small}
\begin{tabular}{cccccc}
\toprule
         & 0.1 & 0.3 & 0.5  & 0.7  \\
\midrule
Train     & 0.946  & 0.901 & 0.813  & 0.610  \\
Test      & 0.391  &0.401  & 0.432  & 0.440 \\
Train - test  & 0.555 & 0.5 &0.381 &0.170  \\
\bottomrule
\end{tabular}
\end{small}
\end{center}
\end{table}

\section{Conclusion}
In this paper, by investigating the upper bound of the generalization error from an information perspective, we found that we can naturally derive a measure of the diversity among the hidden units of DNNs, which differs from the other measures because it contains an inductive-bias term. Based on this insight, we designed a regularization method using the diversity measure as the regularizer. Our experiments verified the effectiveness of the proposed method and provided empirical evidence for the validity of our approach. 
\small
\bibliographystyle{named}

\end{document}